\newtheorem{theo}{Theorem}
\newtheorem{prop}{Proposition}
\begin{document}

\twocolumn[
\icmltitle{Residual Networks: Lyapunov Stability and Convex Decomposition}




\begin{icmlauthorlist}
\icmlauthor{Kamil Nar}{ucb}
\icmlauthor{Shankar Sastry}{ucb}
\end{icmlauthorlist}

\icmlaffiliation{ucb}{Electrical Engineering and Computer Sciences, \hbox{University} of California, Berkeley}

\icmlcorrespondingauthor{Kamil Nar}{nar@berkeley.edu}

\icmlkeywords{Machine Learning, ICML}

\vskip 0.3in
]



 \printAffiliationsAndNotice{}  

\begin{abstract}
While training error of most deep neural networks degrades as the depth of the network increases, residual networks appear to be an exception.~We show that the main reason for this is the Lyapunov stability of the gradient descent algorithm:~for an arbitrarily chosen step size, the equilibria of the gradient descent are most likely to remain stable for the parametrization of residual networks.~We then present an architecture with a pair of residual networks to approximate a large class of functions by decomposing them into a convex and a concave part.~
Some parameters of this model are shown to change little during training, and this imperfect optimization prevents overfitting the data and leads to solutions with small Lipschitz constants, while providing clues about the generalization of other deep networks. 
\end{abstract}

\section{Introduction}

For most neural network architectures, the expressiveness of the network improves as the depth increases. However, training and  test errors of most networks have been shown to deteriorate in practice if the depth exceeds a few layers \cite{originalresnet}. This discrepancy indicates a problem with the method used to train these networks. 
%
Given that gradient-based iterative algorithms are the almost exclusive choice for training, 
the most likely cause is the poor \emph{stability} of the dynamical systems created by these algorithms \emph{in the sense of Lyapunov}\footnotemark[2]. This is further substantiated by the sharp falls observed in the training error
 if a time-varying step size is used \cite{originalresnet}.

It is known that the sensitivity of the loss function with~respect to different parameters can become disproportionate while training a neural network. For example, some~of~the gradients might vanish while others explode, and this prevents an effective training.~As a remedy, changing the geometry of the optimization was suggested  and a regularized descent algorithm was introduced in \cite{Neyshabur17}.~This algorithm was shown to converge in fewer iterations over the data, but it required the computation of a scaling constant for each node of the network at each time step, which depended on almost all other nodes if the network is fully connected.

Residual networks appear to be an exception: 
their training error improves with depth even when a standard gradient method is used \cite{originalresnet}.
To explain their different behavior, 
linear versions of these networks have been shown to possess some crucial properties for optimization.~In particular, 
it was shown that all local optima of linear residual networks are also the global optima, and the gradient of their cost function does not vanish away from the local optima \cite{Moritz}.~Later, equivalent results were derived under some conditions for nonlinear residual networks as well \cite{Bartlett1}.

\footnotetext[2]{Lyapunov stability is a property of the equilibria, but for ease of reading, we will refer to the dynamical systems and the algorithms as stable if their equilibria are stable.}

Lyapunov stability was used in the past to understand and improve the training of neural networks \cite{Michel88, Matsuoka92, Man06}, but the success and~the problems of the state-of-the-art networks have not been analyzed from this perspective. In this paper, we fill this~gap and show that the residual networks are indeed the right architecture to be trained by gradient-based algorithms in terms of Lyapunov stability. More precisely, we show that given an arbitrary step size, the equilibria of the gradient descent algorithm are most likely to remain stable in the sense of Lyapunov for residual networks. We also reveal that the equilibria of the gradient descent algorithm could be unstable for most deep neural networks, in which case the algorithm might approach an optimum but not converge to it \emph{even if the algorithm is not stochastic}, thereby providing some level of regularization.
Our result is~fundamentally different from the previous works \cite{Saxe13, Gunasekar} in that we address whether the local optima can actually be achieved by gradient-based methods rather than how well the local optima are.

We then introduce an architecture with a pair of residual networks that can be used to approximate a large class of functions by decomposing them into a convex and a concave part. 
This decomposition elucidates how each layer improves the approximation and provides an interpretable model. We analyze the properties of its local optima and show that the bias parameters are likely to change little during training. Though this seems like a problem, it in fact prevents overfitting and leads to solutions with low Lipschitz constants, which is also associated with generalization. These claims are verified by testing the suggested model on the MNIST data set with no explicit regularization.

The organization of the rest of the paper is as follows. The stability analysis is given in Section 2 and the model for decomposition is introduced in Section 3.~The properties of the model are derived in Section 4 and the results of the test on the MNIST data set are given in Section 5. Lastly, the results are discussed and some future directions are provided in Section 6.

%
%
%

\section{Stability in the Sense of Lyapunov}
Given a discrete time system with state $w[k] \in \mathbb R^n$ at time $k$ and the update rule $\mathcal F: \mathbb R^n \to \mathbb R^n$,
 a point $w^*$ is called an equilibrium of the system 
 $$ w[k+1] = \mathcal F(w[k])$$ 
  if it satisfies  $w^* = \mathcal F(w^*).$ 
The equilibrium point $w^*$ is said to be stable in the sense of Lyapunov if for every $\epsilon > 0$, there exists some $\delta > 0$ such that $\|w[0] - w^*\| < \delta$ implies $\|w[k] - w^*\| < \epsilon$ for all $k \in \mathbb N$. That is, if the state of the system is close to the equilibrium initially, it always stays close to the equilibrium, as shown in Figure 
1. If, in addition, the state converges to the equilibrium, $w^*$ is said to be asymptotically stable in the sense of Lyapunov. 

\begin{figure}[h]
\label{lyapunov}
\centering
\begin{tikzpicture}[scale=0.9]
\draw[dashed] (3,3) circle [radius=1];
\draw[dashed] (3,3) circle [radius=2];
\draw[<-, dashed] (2,3)--(3,3);
\node at (2.4,2.8) {$\delta$};
\node at (1.65,3.85) {$\epsilon$};
\draw[<-, dashed] (1.2,3.9)--(3,3);
\node at  (2.5,3.7) [above] {$w[0]$};
\node at  (3.1,2.85) {$w^*$};
\draw[->,thick] (2.5,3.75) -- (3.2,3.7);
\draw[->,thick] (3.2,3.7) -- (3.8,4.1);
\draw[->,thick] (3.8,4.1) -- (4.1,3.3);
\draw[->,thick] (4.1,3.3) -- (3.8,2.5);
\draw[->,thick] (3.8,2.5) -- (2.9,2.1);
\draw[->,thick] (2.9,2.1) -- (2.4,1.7);
\draw[->,thick] (2.4,1.7) -- (1.4,2.3);
\draw[->,thick] (1.4,2.3) -- (1.7,3.2);
\draw[->,thick] (1.7,3.2) -- (2.4,3.5);
\draw[->,thick] (2.4,3.5) -- (3.0,3.3);
\end{tikzpicture}
\caption{An equilibrium stable in the sense of Lyapunov}
\end{figure}
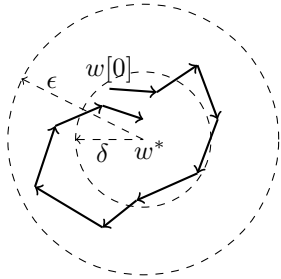

As an example, consider the problem of finding an optimal solution to the ordinary least squares problem
\[ \min_{w \in \mathbb R^n} {1 \over 2} \|X^\top w - y \|^2 \]
via gradient descent, where $X$ and $y$ represent the input and the labels, respectively. Given the step size $\delta$, the update rule for $w[k]$ creates the dynamical system
\begin{eqnarray*} w[k+1] 
&  = & \left( I - \delta XX^\top \right) w[k] + \delta Xy. \end{eqnarray*}
Every equilibrium of this system is asymptotically stable and the gradient descent converges to any of them only if all eigenvalues of the matrix $(I - \delta XX^\top)$ have magnitude less than 1.

As seen from this example, there exists a critical value for the step size above which the gradient descent algorithm becomes unstable and the iterations cannot converge to a local optimum. While this critical value is the same for all equilibria of a linear system, it varies if the dynamical system is nonlinear. Since the gradient descent for deep networks leads to nonlinear dynamics as well, some of its equilibria might become unstable while others are still stable for the same step size, as the following proposition shows.

\begin{prop} Let $f: \mathbb R \to \mathbb R$ be a nonzero linear function, i.e., 
$f(x) = \lambda x$ for all $x \in \mathbb R$ 
and  $\lambda \neq 0$. Given a set of points $\{x_i\}_{i \in [N]}$, assume that $\lambda$ is estimated as a multiplication of the scalar parameters $\{w_j\}_{j \in [L]}$ by minimizing
%
\[ 
{1\over 2N} \sum_{i=1}^N \left(w_L \dots w_2w_1 x_i - f(x_i) \right)^2  \]
via gradient descent.~Then an equilibrium $\{w_j^*\}_{j \in [L]}$ with $w_L^* \dots w_2^* w_1^* = \lambda$ is asymptotically stable only if the step size $\delta$ satisfies
\begin{equation}
\label{bound1}
 \delta \le {2 \over \sigma \sum_{i=1}^L \prod_{j \neq i} (w_j^*)^2} \end{equation}
where $\sigma = \sum_{i=1}^N {x_i^2}/N$. \end{prop}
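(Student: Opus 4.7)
The plan is to reduce the multivariable loss to a one-dimensional one via the product parametrization, identify the Hessian at the prescribed equilibrium, linearize the gradient descent update, and read off the stability bound from the spectral radius of the resulting Jacobian.

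First, I would simplify. Because the predicted value is $w_L\cdots w_1 x_i$, the loss depends on the parameters only through the scalar product $p = w_L w_{L-1}\cdots w_1$, and a direct calculation gives the compact form
\[ \mathcal{L}(w) = \frac{\sigma}{2}\bigl(p - \lambda\bigr)^2, \qquad \sigma = \tfrac{1}{N}\sum_{i=1}^N x_i^2. \]
Differentiating with $\partial p / \partial w_i = q_i := \prod_{j\neq i} w_j$ yields $\partial \mathcal{L}/\partial w_i = \sigma(p-\lambda)q_i$, which vanishes whenever $p = \lambda$, confirming that the points specified in the statement are indeed equilibria of the gradient descent update $w[k+1] = w[k] - \delta \nabla \mathcal{L}(w[k])$.

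Next, I would compute the Hessian at such an equilibrium. A second differentiation gives, for every $i,k$,
\[ \frac{\partial^2 \mathcal{L}}{\partial w_i \partial w_k} = \sigma\, q_i q_k \;+\; \sigma(p-\lambda)\,\frac{\partial q_i}{\partial w_k}, \]
and since $p - \lambda = 0$ at the equilibrium, the second term drops. Evaluating $q_i$ at $w^*$, the Hessian therefore reduces to the rank-one matrix $H = \sigma\, q^* (q^*)^\top$ with $q^*_i = \prod_{j\neq i}w_j^*$. Its only nonzero eigenvalue is $\sigma \|q^*\|^2 = \sigma \sum_{i=1}^L \prod_{j\neq i} (w_j^*)^2$.

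Now I would linearize the gradient descent map at $w^*$. The Jacobian of $w \mapsto w - \delta \nabla \mathcal{L}(w)$ at the equilibrium is $I - \delta H$, whose eigenvalues are $1 - \delta\sigma\|q^*\|^2$ (in the direction of $q^*$) together with $1$ with multiplicity $L-1$ (in the orthogonal directions). A necessary condition for asymptotic stability is that every eigenvalue of the linearization lie inside the closed unit disk, and the only non-trivial constraint is $|1 - \delta\sigma \|q^*\|^2| \le 1$, which rearranges to the bound (\ref{bound1}).

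The main subtlety, rather than any hard calculation, is interpreting the $(L-1)$-dimensional eigenspace with eigenvalue $1$: it corresponds to deformations of the $w_j$'s that keep the product $\prod_j w_j$ fixed, i.e.\ motion along the manifold of zero-loss equilibria. So the relevant ``asymptotic stability'' is convergence transversal to this manifold, and the bound (\ref{bound1}) is exactly the condition for the contraction $|1-\delta\sigma\|q^*\|^2| \le 1$ in the transversal direction spanned by $q^*$. I would include a brief remark making this explicit so the reader sees that the eigenvalue $1$ is not a pathology but an artifact of the over-parametrization.
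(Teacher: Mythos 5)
Your proof is correct, and it reaches the same bound as the paper but by a somewhat different route. The paper multiplies the $L$ coupled scalar update equations together, tracks the induced dynamics of the error $e[k] = \prod_j w_j[k] - \lambda$, arrives at $e[k+1] = \bigl(1 - \delta\sigma\sum_i\prod_{j\neq i}(w_j^*)^2\bigr)e[k] + o(e[k])$, and applies Lyapunov's indirect method to this one-dimensional reduction. You instead compute the full Hessian of the loss at $w^*$, observe that it is the rank-one matrix $H = \sigma q^*(q^*)^\top$ with $q_i^* = \prod_{j\neq i}w_j^*$, and read the same contraction factor off the unique nontrivial eigenvalue of the Jacobian $I - \delta H$. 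The two routes are equivalent---the direction $q^*$ is exactly the direction along which the product $p = \prod_j w_j$ moves to first order---but your presentation has two advantages: it makes explicit the $(L-1)$-dimensional neutral eigenspace tangent to the manifold $\{p = \lambda\}$ of zero-loss equilibria, which the paper's scalar reduction silently discards, and it avoids the slightly fussy step of multiplying $L$ coupled updates and absorbing cross terms into $o(e[k])$. Your closing remark is worth keeping: since the linearization always carries the eigenvalue $1$ with multiplicity $L-1$, the equilibrium is never hyperbolic, and the proposition's ``asymptotically stable'' must be read as attraction of the transversal component $e[k]$ to zero---which is precisely the quantity the paper chooses to track.
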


\begin{proof} The update rule for $w_i[k]$ is given as
\begin{equation}
\label{erk1} w_i[k+1] = w_i[k] - \delta  \sigma \Big( \prod_{j=1}^L w_j[k] - \lambda \Big) \prod_{j \neq i} w_j[k]. \end{equation}
By multiplying these update equations for all $i \in [L]$ and denoting $\left(\prod w_i[k] - \lambda\right)$ by $e[k]$, we obtain
\[ e[k+1]  = e[k] - \delta \sigma e[k] \sum_{i=1}^L \prod_{j \neq i} w_j^2[k]  + o(e[k])\]
where $o(\cdot)$ denotes the higher order terms in its argument. By Lyapunov's indirect method of stability \cite{Khalil}, an equilibrium of a nonlinear system is asymptotically stable only if the linear approximation around that equilibrium is not unstable. Therefore, $e[k]$ converges to zero only if
\[ \Big| 1 - \delta \sigma \sum_{i=1}^L \prod_{j \neq i}(w_j^*)^2 \Big|  \le 1 \]
for the equilibrium $\{w_i^*\}_{i \in [L]}.$ \end{proof}

Note that the bound in (\ref{bound1}) can become very small for an equilibrium with disproportionate parameters such as $\{\kappa \lambda^{1/L}, \dots, \kappa \lambda^{1/L}, \kappa^{1-L} \lambda^{1/L} \}$ with $\kappa \gg 1$; and it attains its largest value
\[ \delta_\text{max} = {2 \over \sigma L \lambda^{2(L-1)/L}}\]
 for the equilibria which satisfy $|w_i^*| = |\lambda|^{1/L}$ for all $i \in [L]$. 
If the step size is even larger than $\delta_\text{max}$, then the gradient descent algorithm cannot converge to any of the  local optima.


Proposition 1 shows that for an arbitrarily chosen $\delta$, the equilibria with $|w_i^*| = |\lambda|^{1/L}$ are most likely to be stable. This suggests, for example, that if $\lambda$ is known to be positive and $L$ is very large, setting $w_i[0] = 1$ for all $i \in [L]$ is a good choice of initialization. In fact, the gradient descent converges exponentially fast with this initialization if $\delta$ is chosen appropriately, which is shown next. 

\begin{prop} Assume that $\lambda > 0$ and $w_i[0] = 1$ for all $i \in [L]$. If the step size $\delta$ is chosen to be less than or equal~to 
\[ \delta_c = \left\{ \begin{array}{l l}
{(L \sigma)^{-1} \lambda^{-2(L-1)/L}} & \text{ if } \lambda \in [1, \infty),\\
\sigma^{-1}(1-\lambda)^{-1}(1 - \lambda^{1/L}) & \text{ if } \lambda \in (0,1),
\end{array} \right. \]
then $|w_i[k] - \lambda^{1\over L}| \le \rho(\delta)^k |1 - \lambda^{1\over L}|$ for all $i \in [L]$, where
\[ \rho(\delta) = \left\{ \begin{array}{l l}
 1 - \delta \sigma (\lambda - 1) (\lambda^{1/L} -1)^{-1}  & \text{ if } \lambda \in (1, \infty),\\
 1 - \delta \sigma L \lambda^{2(L-1)/L}   & \text{ if } \lambda \in (0, 1].
\end{array}
\right.
\]
\end{prop}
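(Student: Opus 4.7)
My plan is to exploit the symmetry of the iteration first. Since $w_i[0] = 1$ for every $i$ and the update (\ref{erk1}) treats the coordinates symmetrically (the $i$-th equation depends only on $w_i$ and $\prod_{j\neq i} w_j$), the orbit remains on the diagonal $w_1[k] = \cdots = w_L[k] =: w[k]$ for all $k$. The $L$-dimensional dynamics therefore collapse to the scalar recursion
\begin{equation*}
w[k+1] = g(w[k]), \qquad g(w) := w - \delta \sigma (w^L - \lambda)\, w^{L-1},
\end{equation*}
and, writing $\beta := \lambda^{1/L}$, the goal reduces to proving $|w[k] - \beta| \le \rho(\delta)^k |1 - \beta|$ for the scalar map $g$.

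Next I would factor out the error. The identity $w^L - \lambda = (w - \beta)\sum_{i=0}^{L-1} w^i \beta^{L-1-i}$ gives
\begin{equation*}
g(w) - \beta = (w - \beta)\bigl[1 - \delta \sigma\, h(w)\bigr], \qquad h(w) := \sum_{i=0}^{L-1} w^{L-1+i}\,\beta^{L-1-i},
\end{equation*}
where $h$ is strictly increasing on $(0,\infty)$, with $h(1) = (\lambda-1)/(\beta-1) = \sum_{j=0}^{L-1}\beta^j$ and $h(\beta) = L\beta^{2(L-1)}$. From here the whole statement reduces to keeping the bracket in $[0,\rho(\delta)]$ throughout the trajectory, after which a one-line induction finishes the job.

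For $\lambda \ge 1$ the orbit starts at $w[0] = 1 \le \beta$, and I would show inductively that $w[k] \in [1,\beta]$. On this interval, monotonicity of $h$ sandwiches the bracket between $1 - \delta\sigma L\beta^{2(L-1)}$ and $\rho(\delta) = 1 - \delta\sigma(\lambda-1)/(\beta-1)$. The stated bound $\delta \le \delta_c = (L\sigma)^{-1}\lambda^{-2(L-1)/L}$ makes the lower end nonnegative, so $g(w) \in [w,\beta] \subseteq [1,\beta]$ and $|g(w) - \beta| \le \rho(\delta)|w - \beta|$, which closes the induction and yields the geometric decay.

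The case $\lambda \in (0,1)$ is mirror-symmetric in structure but the endpoints swap: now $w[0] = 1 \ge \beta$, and the same factorization sandwiches the bracket between $1 - \delta\sigma(1-\lambda)/(1-\beta)$ and $\rho(\delta) = 1 - \delta\sigma L\beta^{2(L-1)}$. The prescribed $\delta_c = (1-\beta)/[\sigma(1-\lambda)]$ forces the former to be $\ge 0$, and the same monotone-induction argument gives $g(w) \in [\beta,w]$ with contraction factor $\rho(\delta)$. The only nontrivial bookkeeping, which I expect to be the main obstacle, is verifying that $\rho(\delta) \ge 0$ in this regime so that the contraction factor is a genuine number in $[0,1)$; this reduces to $L\beta^{2(L-1)} \le \sum_{j=0}^{L-1}\beta^j$ for $\beta \in (0,1]$, which is immediate from $\beta^{2(L-1)} \le \beta^{L-1} \le \beta^j$ for $0 \le j \le L-1$. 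The trivial case $\lambda = 1$ (where $w[k] \equiv 1$) falls out of either branch.
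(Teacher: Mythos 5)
Your proposal is correct and follows the same path as the paper's proof: reduce to the scalar iterate by symmetry, factor out $(w-\beta)$ to get $w[k+1]-\beta = \mu(w[k])(w[k]-\beta)$, and use monotonicity of $\mu$ on the invariant interval between $1$ and $\beta$ to read off $\delta_c$ (from nonnegativity of $\mu$ at the far endpoint) and $\rho(\delta)$ (from $\mu$ at the near endpoint). You simply spell out more carefully what the paper compresses into one sentence — the endpoint evaluations $h(1)=(\lambda-1)/(\beta-1)$, $h(\beta)=L\beta^{2(L-1)}$, the induction keeping $w[k]$ in the interval, and the sanity check $\rho(\delta)\ge 0$ when $\lambda\in(0,1)$.
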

\begin{proof} Due to symmetry, $w_i[k] = w_j[k]$ for all $k \in \mathbb N$ for all $i,j \in [L]$. Denoting any of them by $w[k]$, we have
\[ w[k+1] = w[k] - \delta \sigma w^{L-1}[k](w^L[k] - \lambda).\]
To show that $w[k]$ converges to $\lambda^{1/ L}$, we can write
\[ w[k+1]-\lambda^{1/L} = \mu(w[k])(w[k]-\lambda^{1/ L}),\]
where
\vspace{-0.1in}
\[ \mu(w) =  1 - \delta \sigma w^{L-1} \sum_{j=0}^{L-1}w^j\lambda^{(L-1-j)/L}. \] 
\vspace{-0.1in}
If there exists some $\rho \in [0,1)$ such that
\begin{equation}
\label{cond1}
0 \le \mu(w[k]) \le \rho \text{ for all } k \in \mathbb N,
\end{equation}
then $w[k]$ is always larger or always smaller than $\lambda^{1 / L}$, and its distance to $\lambda^{1/ L}$ decreases by a factor of $\rho$ at each step. Since $\mu(w)$ is a monotonic function in $w$, the condition (\ref{cond1}) holds for all $k$ if it holds only for $w[0]=1$ and $\lambda^{1 /L}$, which gives us $\delta_c$ and $\rho(\delta).$ \end{proof}
The identical result also holds for $\lambda < 0$ if $|w_i[0]|=1$ for all $i \in [L]$ and the cardinality of the set $\{i\in [L] : w_i[0] = -1\}$ is odd. However, without knowing the sign of $\lambda$, we cannot decide whether to include a $-1$ in the initialization.~We introduce a decomposition for linear functions to handle this problem in {{Section~3.2.}}

We can extend the results in Propositions 1 and 2 to higher dimensions as well. Assume that $x_i \in \mathbb R^n$ for all $i \in [N]$, and let $R \in \mathbb R^{n \times n}$ be a nonzero matrix. If gradient descent is used with step size $\delta$ to solve
\begin{equation} \label{probb2}
 \min_{W_1, \dots, W_L} {1 \over 2N} \sum_{i=1}^N {\|W_L \dots W_2W_1x_i - Rx_i \|}_2^2, \end{equation}
where $W_i \in \mathbb R^{n \times n}$ for all $i \in [L]$, then the update rule for $W_i$ is 
\begin{equation}
\label{update_eqn}
 W_i[k+1] = W_i[k] - \delta\left( G_i[k](\hat F[k] - R) \Sigma H_i[k] \right), \end{equation}
where $G_i[k] = W_{i+1}^\top[k] \cdots W_L^\top[k]$ for $i < L$, $G_L[k] = I$,
 $\hat F[k] = W_L[k]\cdots W_1[k]$, $H_i[k] = W_1^\top[k] \cdots W_{i-1}^\top[k]$ for $i>1$, $H_1[k]=I$, and
$\Sigma = {1\over N} \sum_{i=1}^N x_ix_i^\top$.

Even though the evolution of each element of $W_i[k]$ seems to depend on all the entries of the other matrices, the system described by (\ref{update_eqn}) decomposes into $n$ independent systems under the assumptions given in Theorem 1. 

%
%


\begin{theo} Let $\lambda(R)$ denote the spectral radius of the~matrix $R$ in (\ref{probb2}). Assume that $R$ is diagonalizable with real eigenvalues, the initial matrices $W_1[0], \dots, W_L[0]$ and $R$ have the identical eigenspaces, and the matrix $\Sigma = I$. If the step size $\delta$ satisfies
\[ \delta > {2 \over L\lambda(R)^{2(L-1)/L}},\]
then none of the equilibria that satisfy $W_L\cdots W_1 = R$ is stable, and the gradient descent cannot converge.
\end{theo}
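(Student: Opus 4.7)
The plan is to use the shared-eigenbasis assumption to decouple the matrix update (\ref{update_eqn}) into $n$ independent scalar updates, each identical to the one analyzed in Proposition 1, and then to observe that the coordinate whose eigenvalue equals the spectral radius is exactly the one producing the threshold $2/(L\lambda(R)^{2(L-1)/L})$ in the theorem. Concretely, the hypotheses let us write $R = V\Lambda V^\top$ and $W_i[0] = V D_i[0] V^\top$ with a common orthogonal $V$ and diagonal matrices $\Lambda = \mathrm{diag}(\lambda_1,\ldots,\lambda_n)$ and $D_i[0]$; the rest of the proof is then about tracking the diagonal entries $d_i^{(m)}[k]$ coordinatewise.

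The main technical step is an induction on $k$ showing that the simultaneous diagonal structure is preserved, i.e.\ $W_i[k] = V D_i[k] V^\top$ for all $i$ and $k$. Assuming it holds at step $k$, the factors in (\ref{update_eqn}) become $\hat F[k]-R = V\bigl(\prod_j D_j[k]-\Lambda\bigr)V^\top$, $G_i[k] = V\bigl(\prod_{j=i+1}^L D_j[k]\bigr)V^\top$, and $H_i[k] = V\bigl(\prod_{j=1}^{i-1}D_j[k]\bigr)V^\top$; the transposes behave cleanly because $V$ is orthogonal and each $D_j[k]$ is diagonal. Using $\Sigma = I$, the product $G_i[k](\hat F[k]-R)\Sigma H_i[k]$ is again of the form $V(\text{diagonal})V^\top$, so subtracting it from $W_i[k]$ preserves the form. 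Reading off the $m$-th diagonal entry then gives
\[ d_i^{(m)}[k+1] = d_i^{(m)}[k] - \delta\Big(\prod_{j=1}^L d_j^{(m)}[k]-\lambda_m\Big)\prod_{j\neq i}d_j^{(m)}[k], \]
which coincides with (\ref{erk1}) for the scalar problem with $\sigma=1$ and target $\lambda_m$.

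Now Proposition 1 applies in each coordinate: any equilibrium $\{d_j^{*(m)}\}$ with $\prod_j d_j^{*(m)}=\lambda_m$ is asymptotically stable only if $\delta\le 2\bigl/\sum_i\prod_{j\neq i}(d_j^{*(m)})^2$, a bound whose maximum over such equilibria is $2/(L|\lambda_m|^{2(L-1)/L})$. Choosing $m^\star$ with $|\lambda_{m^\star}|=\lambda(R)$ gives the tightest bound, $2/(L\lambda(R)^{2(L-1)/L})$; whenever $\delta$ exceeds it, the $m^\star$-coordinate of every equilibrium with $W_L^*\cdots W_1^*=R$ is linearly unstable, and a single unstable coordinate is enough to destabilize the matrix equilibrium as a whole, so gradient descent cannot converge. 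The main obstacle is the invariance step: preserving the common eigenbasis through the transposed factors in (\ref{update_eqn}) crucially requires both the orthogonality of $V$ and the assumption $\Sigma=I$, and without either, the decoupling into scalar dynamics would fail.
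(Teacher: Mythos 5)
Your proof is correct and takes essentially the same route as the paper: simultaneously diagonalize all matrices, observe that (\ref{update_eqn}) decouples into $n$ independent scalar $L$-chains, and apply Proposition~1 coordinate-by-coordinate with the worst case being the spectral radius. You flesh out two steps the paper leaves implicit — the induction that the common diagonal form is invariant under the update, and the AM--GM observation that $2/(L|\lambda_m|^{2(L-1)/L})$ is the \emph{largest} stability threshold over all factorizations of $\lambda_m$, so exceeding it kills every equilibrium in that coordinate — so your write-up is actually tighter than the paper's.

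One point worth flagging, which you touch on but slightly misattribute: you write $R = V\Lambda V^\top$ with $V$ orthogonal and say the hypotheses ``let us'' do this, but ``diagonalizable with real eigenvalues'' does not give an orthogonal eigenbasis unless $R$ is in fact normal/symmetric. As you correctly observe at the end, orthogonality of the change of basis is what makes the transposed factors $G_i, H_i$ in (\ref{update_eqn}) cooperate with $\Sigma = I$; with a general invertible $M$ the products pick up $M^\top M$ factors and the decoupling fails. The paper's proof uses a generic invertible $M$ and does not notice this, so both proofs share the same implicit assumption. You identified the gap; you just shouldn't claim the stated hypotheses already deliver it.
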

\begin{proof} There exists a common invertible matrix $M \in \mathbb R^{n \times n}$ that can diagonalize all  matrices in (\ref{update_eqn}): $R = M\Lambda_RM^{-1}$, $W_i = M\Lambda_{W_i}M^{-1}$ for all $i \in [L]$. Then the update rule~(\ref{update_eqn}) turns into $n$ independent update rules for the diagonal elements of $\Lambda_R$ and $\{\Lambda_{W_i}\}_{i \in [L]}$. By Proposition 1, all of these systems can converge only if $$\delta \le { 2 \over  L \lambda_r^{2(L-1)/L}}$$ for each eigenvalue $\lambda_r$ of $R.$
\end{proof}
\begin{theo}Assume that $R$ is diagonalizable and all of its eigenvalues are real and positive. If \ $ \Sigma = I$ and $W_i[0] = I$ for all $i \in [L]$ and the step size $\delta$ satisfies
\[ \delta \le {1 \over L} \min\left\{ 1, {1 \over \lambda(R)^{2(L-1)/L}} \right\}, \]
then each $W_i$ converges to $R^{1/L}$ exponentially fast. \end{theo}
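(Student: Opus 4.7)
The plan is to reduce Theorem 2 to Proposition 2 by the same simultaneous-diagonalization trick that was used in the proof of Theorem 1. Since $W_i[0]=I$ trivially shares eigenspaces with any diagonalizable $R$, the hypothesis of the decomposition in Theorem 1 is satisfied: writing $R = M\Lambda_R M^{-1}$, I can track $W_i[k] = M\Lambda_{W_i}[k]M^{-1}$, and the update rule (\ref{update_eqn}) with $\Sigma = I$ decouples into $n$ independent scalar updates, one per eigenvalue $\lambda_r>0$ of $R$. The symmetry of the initialization $W_i[0]=I$ further forces $W_i[k]=W_j[k]$ for all $i,j,k$, so each scalar subsystem is exactly the setting of Proposition 2 with target $\lambda=\lambda_r$, data variance $\sigma=1$, and initial value $1$.

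It then suffices to show that the given step size $\delta \le L^{-1}\min\{1,\lambda(R)^{-2(L-1)/L}\}$ satisfies the bound $\delta \le \delta_c(\lambda_r)$ from Proposition 2 for every eigenvalue $\lambda_r$. For $\lambda_r\ge 1$, Proposition 2 requires $\delta \le L^{-1}\lambda_r^{-2(L-1)/L}$, which follows immediately from $\lambda_r\le\lambda(R)$. For $\lambda_r\in(0,1)$, the requirement is $\delta\le (1-\lambda_r)^{-1}(1-\lambda_r^{1/L})$; using the geometric identity $1-\lambda_r = (1-\lambda_r^{1/L})\sum_{j=0}^{L-1}\lambda_r^{j/L}$ together with $\lambda_r^{j/L}<1$ gives $(1-\lambda_r)^{-1}(1-\lambda_r^{1/L})>1/L\ge\delta$, completing the check.

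With the step-size condition verified on every eigenvalue, Proposition 2 delivers exponential convergence $w_r[k]\to \lambda_r^{1/L}$ for each $r$ with rate $\rho(\delta)$ given by the corresponding case of the proposition, and reassembling the diagonals through $M$ yields $W_i[k]\to M\Lambda_R^{1/L}M^{-1}=R^{1/L}$ with the same exponential rate. The only non-routine piece is the bound in the $\lambda_r\in(0,1)$ regime, but it is a one-line geometric-sum argument; I therefore do not expect any real obstacle beyond the bookkeeping needed to juggle the two pieces of the min in the step-size bound and to track which case of Proposition 2's rate $\rho(\delta)$ applies on each eigenvalue.
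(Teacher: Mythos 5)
Your proof follows the same route as the paper's: diagonalize via the shared eigenbasis (which $W_i[0]=I$ trivially provides), decouple into $n$ scalar subsystems, and apply Proposition 2 to each. The only difference is cosmetic — the paper dispatches the step-size check in one line by asserting that $\delta_c(\lambda)$ is monotonically decreasing and then plugging in $\lambda(R)$, whereas you verify the two regimes $\lambda_r\ge 1$ and $\lambda_r\in(0,1)$ directly, with the geometric-sum identity $1-\lambda_r=(1-\lambda_r^{1/L})\sum_{j=0}^{L-1}\lambda_r^{j/L}$ supplying the bound $\delta_c(\lambda_r)>1/L$ in the second case; your version just makes explicit the computation that underlies the paper's monotonicity claim.
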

\begin{proof} After bringing the update equation (\ref{update_eqn}) into diagonal form, Proposition 2 can be applied to each of the $n$ systems involving the diagonal elements. Since $\delta_c$ in Proposition 2 is monotonically decreasing in $\lambda$, the bound for the maximum eigenvalue of $R$ guarantees linear convergence. 
\vspace{-0.1in} \end{proof}

If the matrix $\Sigma$ is not the identity and  the eigenvectors of $\Sigma$ do not lie in the eigenspaces of $R$, then the update rules for gradient descent remain coupled and the dynamics of the parameters become more complex. Furthermore, if a stochastic gradient method is used, then the update rules may still not decouple even if the input points $\{x_i\}_{i \in [N]}$ are~orthonormal, unless each $x_i$ lies in an eigenspace of $R$.~In this case, taking a batch of points $\{x_j\}_{j \in J}$ which satisfy $\sum_{j \in J} x_j x_j^\top = I$ for some $J \subset [N]$ at each step of the gradient descent simplifies the dynamics.

It was shown in \cite{Moritz} that when the matrices $\{W_i\}_{i\in[L]}$ are close to the identity, the gradient cannot vanish unless $\hat F$ is close to $R$, which can also be seen from (\ref{update_eqn}). However, the stability of the gradient descent algorithm was not addressed. From the proofs for Theorem 1 and Theorem 2, we observe that by keeping all eigenvalues of the matrices $\{W_i\}_{i\in [L]}$ close to each other, it is possible to find a step size that will maintain the stability of the gradient descent while providing an effective convergence rate. 

Note that  if we used distinct unitary matrices instead of the identity, the gradients would still vanish only when $\hat F = R$, and therefore, every local optima would still be the global optima. In addition, gradients with respect to different parameters would likely not become disproportionate since unitary matrices do not amplify or attenuate the eigenvectors of other matrices either. Therefore, using unitary matrices instead of the identity could possibly yield some results comparable to residual networks, although the dynamics of the parameters would be harder to analyze.


\section{Decomposition of Functions}

In the previous section, we have seen that the parametrization of linear residual networks are well suited for  optimization via gradient descent. In this section, we show that a large class of functions can actually be decomposed into two parts each of which can be approximated by a residual network. 

\subsection{Convex Decomposition for Nonlinear Functions}

The following theorem from \cite{Bartlett1}  provides a set of sufficient conditions under which a function can be written as a sequence of functions all of which are close to the identity.

\begin{theo} \cite{Bartlett1} Consider a function $h: \mathbb R^n \to \mathbb R^n$ on a bounded domain. Suppose that it is differentiable, invertible, and $\alpha$-smooth:
\[ \| Dh(y) -  Dh(x) \| \le \alpha \|y - x\| \]
for all $x,y \in \text{dom}(h)$ for some $\alpha > 0$, where $Dh$ is the derivative and $\|Dh(y)\|$ is the operator norm. Further assume that the inverse $h^{-1}$ is Lipschitz, and \hbox{$\text{det}(Dh(x_0))>0$} for some $x_0 \in \text{dom}(h)$. Then for all~$L$, there are $L$ \ functions $h_1, \dots, h_L : \mathbb R^n \to \mathbb R^n$ satisfying 
\[ h_L \circ h_{L-1} \circ \cdots \circ h_1  = h \]
and $\|h_i - \text{Id} \|_L = O(\text{log}L / L)$ for all $i \in [L]$, where $\text{Id}(\cdot)$ is the identity function and $\| \cdot \|_L$ denotes the Lipschitz seminorm. \end{theo}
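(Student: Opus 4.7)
The plan is to realise $h$ as the time-$1$ value of a smooth isotopy $\{H_t\}_{t\in[0,1]}$ of diffeomorphisms with $H_0 = \text{Id}$, and then to set
\[
 h_i := H_{i/L} \circ H_{(i-1)/L}^{-1}, \qquad i = 1, \dots, L.
\]
The composition identity $h_L \circ \cdots \circ h_1 = H_1 \circ H_0^{-1} = h$ then telescopes automatically, so the entire task reduces to choosing the isotopy well and bounding $\|h_i - \text{Id}\|_L$ by $O(\log L/L)$.

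First, I would verify that the hypotheses force $h$ to be an orientation-preserving diffeomorphism on its bounded domain. The $\alpha$-smoothness makes $Dh$ continuous, and the Lipschitz inverse together with the chain rule yields a uniform lower bound on the singular values of $Dh$, so $\det Dh$ never vanishes; continuity plus the prescribed positive sign at $x_0$ then gives $\det Dh > 0$ throughout. After extending $h$ to a compactly supported perturbation of the identity on $\mathbb{R}^n$, standard connectedness results for the orientation-preserving diffeomorphism group produce a smooth path $\{H_t\}$ joining $\text{Id}$ to $h$; equivalently, one prescribes $H_t$ as the flow of a time-dependent vector field $v_t$.

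With the isotopy in hand and $x := H_{(i-1)/L}^{-1}(y)$, the fundamental theorem of calculus gives
\[
 h_i(y) - y = \int_{(i-1)/L}^{i/L} \partial_t H_t(x)\, dt
\]
and
\[
 D h_i(y) - I = \Bigl(\int_{(i-1)/L}^{i/L} \partial_t DH_t(x)\, dt\Bigr) \bigl(DH_{(i-1)/L}(x)\bigr)^{-1}.
\]
The first identity controls the sup-norm, and the second, combined with the uniform bound on $\|DH_t^{-1}\|$ inherited from the Lipschitz-inverse hypothesis, controls the Lipschitz seminorm $\|h_i - \text{Id}\|_L$. The $\alpha$-smoothness hypothesis is precisely what converts pointwise bounds on the generator $v_t$ into uniform-in-$t$ bounds on $\partial_t DH_t$.

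The main obstacle is producing an isotopy whose temporal derivative of the Jacobian is controlled sharply enough to yield the claimed rate. A naive linear interpolation $H_t(x) = (1-t)x + t\, h(x)$ can fail to be invertible unless $\|h - \text{Id}\|_L < 1$, which does not follow from the hypotheses, so one is forced to rescale time. A logarithmic-in-$t$ reparametrisation converts the global smoothness and Lipschitz-inverse data of $h$ into uniform bounds on $v_t$, at the cost of allowing $\|\partial_t DH_t\|$ to grow like $1/t$ near $t = 0$. Integrating this growth over a sub-interval of length $1/L$ is what produces the $O(\log L / L)$ factor; verifying that every $H_t$ remains a diffeomorphism under this rescaling, and packaging the resulting estimates uniformly in $i$, is the technical crux.
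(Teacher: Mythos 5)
The paper cites this theorem from Bartlett et al.\ without reproducing a proof, so there is no in-paper argument to compare against; what follows assesses your sketch against the argument in the source. Your high-level strategy---realize $h$ as the endpoint of a path of diffeomorphisms and telescope $h_i = H_{t_i} \circ H_{t_{i-1}}^{-1}$---is the right shape. But invoking abstract connectedness of the orientation-preserving diffeomorphism group gives existence of \emph{some} path with no quantitative control whatsoever, which is what the whole theorem is about. The source instead works with an explicit family: after normalizing so that $h(0) = 0$ and $Dh(0) = I$ (the affine corrections are pushed into the first and last layers), one uses the scaling family $f_a(x) = h(ax)/a$, which interpolates between $\text{Id}$ at $a = 0$ and $h$ at $a = 1$, and for which $\alpha$-smoothness and the Lipschitz inverse translate directly into bounds on $\|f_a - f_b\|_L$.

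The concrete quantitative gap is in your slicing. You take uniform cut points $t_i = i/L$ and claim that a bound $\|\partial_t DH_t\| \lesssim 1/t$, integrated over an interval of length $1/L$, yields $O(\log L / L)$. It does not:
\[
\int_{(i-1)/L}^{i/L} \frac{dt}{t} \;=\; \log\!\left(\frac{i}{i-1}\right),
\]
which diverges for $i=1$ and equals $\log 2 = \Theta(1)$ for $i=2$, so the per-layer Lipschitz seminorm is not uniformly small under your partition. To convert a $1/t$-type singularity into a uniform $O(\log L / L)$ bound you must slice \emph{geometrically}: choose $t_i / t_{i-1} = \rho$ constant, so each slice contributes $\log \rho$; with $t_1 \asymp 1/L$ and $t_L = 1$ this forces $\log \rho = \Theta(\log L / L)$, which is exactly where the $\log L$ comes from, and the residual interval $[0, t_1]$ is handled by the smallness of the scaling parameter rather than by integration. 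A second gap is that you never address the affine normalizations: the translation by $x_0$ and the linear factor $Dh(x_0)$ are $\Theta(1)$ away from the identity, and splitting them across the $L$ layers (using, e.g., an $L$-th root of $Dh(x_0)$, which exists as a near-identity matrix precisely because $\det Dh(x_0) > 0$) is a separate step where the positive-determinant and Lipschitz-inverse hypotheses actually earn their keep.
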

Theorem 3 provides an existence result in the function space without assuming any fixed structure for the estimator.~If a neural network is used, for example, the width of the network might need to be very large or a certain nonlinearity might be needed at each layer depending on the function to be estimated. 
In the sequel, we show that a residual network that contains rectified linear units (ReLU) as nonlinearities could be used to approximate strictly convex functions.

Note that if $f: D  \to \mathbb R$ is a twice differentiable, strictly convex function over a bounded domain $D \subset \mathbb R^n$, 
 then its gradient $\nabla f$ satisfies all the conditions of Theorem 3. Therefore, we can represent $\nabla f$ with a residual network. 
First consider a univariate function $f: [0, M] \to \mathbb R$ which is continuously differentiable and strictly convex on its domain and has a strictly positive derivative at 0, i.e., $f'(0) > 0$. A first order approximation to $f$ around 0 is
\begin{equation}
\label{firstord}
 \hat f(x) = f(0) + f'(0)x. \end{equation}
 However, given that $f$ is strictly convex, the approximation in (\ref{firstord}) underestimates the function particularly at larger values of $x$. To increase both the estimate and the derivative of the estimate for larger $x$ values, we can instead use
 \[ h_1(x) = x + w_1(x - b_1)_+, \]
 \[ h_2(x) = h_1(x) + w_2(h_1(x) - b_2)_+, \]
 \[ \hat f(x) = f(0) + f'(0)h_2(x),\]
 where $w_1,w_2,b_1,b_2 \in \{z \in \mathbb R : z > 0\}$, and $(z)_+$ denotes $\max\{0,z\}$. The estimate $\hat f$ is strictly increasing, and as $x$ gets large, the derivative of the estimate
  gradually increases to {$(1+w_2)(1+w_1)f'(0)$}, provided that $b_1$ and $b_2$ are not too large. As a result, $\hat f$ provides a better estimate for the original function.
  
 Similarly, given a strictly convex, twice differentiable function $f : D \subset \mathbb R^n \to \mathbb R$ with a domain of the form
 \[ D = [0,M_1] \times \dots \times [0,M_n] \]
 and a gradient with positive coordinates at the origin, 
 we can approximate it with
 \begin{subequations}
 \begin{gather}
\label{layer0} h_0(x) = x,\\
 \hspace{-0.1in} h_i(x) = h_{i-1}(x) + W_i[V_i^\top h_{i-1}(x)-b_i]_+ \ \forall  i \in [L],\\
 \label{layerL} \hat f(x) = c^\top h_L(x) + d,
 \end{gather}
 \end{subequations}
 where all of  $W_i \in \mathbb R^{n \times {m_i}}$, $V_i \in \mathbb R^{n \times {m_i}}$, $b_i \in \mathbb R^{m_i}$ have nonnegative elements, $c \in \mathbb R^{n}$ has positive elements, and $d \in \mathbb R$. The function described by (\ref{layer0})--(\ref{layerL}) is convex because for each $i \in [L]$, every coordinate of $h_i(x)$ is obtained by taking nonnegative combination and pointwise maximum of the coordinates of $h_{i-1}(x)$, both of which are operations that preserve convexity \cite{Boyd}. 
 
 To illustrate how a function described by (\ref{layer0})--(\ref{layerL}) approximates a convex function, consider $\hat f : [0,M_1] \times [0,M_2] \to \mathbb R$ defined as
 \begin{subequations}
 \begin{gather}
\label{f11} h_1(x) = x +
 \left(x - b\right)_+\\
\label{f22}  \hat f(x) = [1 \quad 1]\ h_1(x)
 \end{gather}
 \end{subequations}
 where $W_1=V_1=I$, $b = [b_1 \ \ b_2]^\top$ and $b_1, b_2 > 0$. Gradient of $\hat f$ near the origin is $[1\ \ 1]^\top$, and when $[1 \ \ 0]x - b_1 \ge 0$ or $[0\ \ 1]x - b_2 \ge 0$ holds, the gradient gets an increment of $[1\ \ 0]^\top$ or $[0\ \ 1]^\top$, respectively. Figure 2 shows the level curves of the function obtained. 

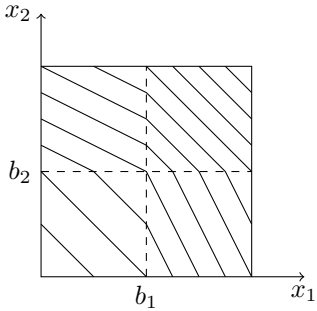
\begin{figure}[h]
\label{bowl1}
\centering
\begin{tikzpicture}[scale=0.7]
\draw[<->] (5,0) node[below]{$x_1$} -- (0,0) -- (0,5) node[left]{$x_2$};
\draw (2,0) node[below]{$b_1$};
\draw (0,2) node[left]{$b_2$};
\draw (4,0)--(4,4)--(0,4);
\draw[dashed] (2,0)--(2,4);
\draw[dashed] (0,2)--(4,2);
\draw (1,0)--(0,1);
\draw (2,0)--(0,2);
\draw (0,2.5)--(1,2)--(2,1)--(2.5,0);
\draw (0,3)--(2,2)--(3,0);
\draw (0,3.5)--(2,2.5)--(2.5,2)--(3.5,0);
\draw (0,4)--(2,3)--(3,2)--(4,0);
\draw (1,4)--(2,3.5)--(3.5,2)--(4,1);
\draw (2,4)--(4,2);
\draw (2.5,4)--(4,2.5);
\draw (3,4)--(4,3);
\draw (3.5,4)--(4,3.5);
\end{tikzpicture}
\caption{Level curves of $\hat f$ defined in (\ref{f11})--(\ref{f22}) coincide with a bowl-shaped function.}
\end{figure}
 
 Note that the level curves shown in Figure 2 resemble the level curves of a bowl-shaped function. If we used a different matrix $V$ instead of the identity matrix, we would see that the columns of $V$  determine the normals of the lines beyond which the gradient is incremented, while the elements of $b$ determine the distance of these lines to the origin.
 
 Even though the sequence of functions given in (\ref{layer0})--(\ref{layerL}) has been shown to represent only convex functions, it can be used as the building block to represent a much broader class of functions. To show this, let $f:  [0,M_1] \times \dots \times [0,M_n] \to \mathbb R^m$ be a twice-differentiable function, and let $f_k(x) \in \mathbb R$ denote the $k^\text{th}$ coordinate of $f(x)$. Then $f_k$ can be written~as
\[ f_k(x) = r(x) - s(x),\]
where both $r$ and $s$ are strictly convex and their gradients at the origin have strictly positive coordinates, e.g., 
\[ r(x) = {\alpha \over 2} x^\top x + \beta^\top x +  f_k(x),\]
\[s(x) =  {\alpha \over 2} x^\top x  + \beta^\top x,\]
where $\beta \in \mathbb R^n$ is a vector with positive elements and $\alpha \in \mathbb R$ is a positive constant large enough to make the Hessian of $r(x)$ positive definite everywhere in the domain of $f_k$. Then, a pair of residual networks (\ref{layer0})--(\ref{layerL}) can be used to approximate each coordinate of $f$, and consequently, $m$ pairs of residual networks can be used to approximate~$f$.
%
This architecture is tested on the MNIST data set in {{Section~5.}}

Given that a single residual network has been shown to perform well in practice, one could question the necessity of decomposing the functions into two parts.~A similar decomposition for linear mappings is shown in Section \ref{poseigen} to be necessary and to improve the convergence of the parameters. 

\subsection{Positive Eigenvalue Decomposition for Linear Functions}
\label{poseigen}

In Section 2, Theorem 2 was stated only for the linear mappings with positive eigenvalues.~Even though we could argue that we might as well initialize some of the diagonal elements of the weight matrices with -1 to estimate negative eigenvalues, this is not possible without knowing the signs of the eigenvalues a priori. On the other hand, if all the weight matrices are initialized as the identity, then the diagonal elements corresponding to the negative eigenvalues converge to 0, not to a negative value, which is shown next.  

\begin{prop} Assume that $\lambda < 0$ and $w_i[0] = 1$ is used for all $i \in [L]$ to initialize the gradient descent algorithm to solve
\[ \min_{(w_1, \dots, w_L) \in \mathbb R^L} {1\over 2N} \sum_{i=1}^N \left(w_L \dots w_2w_1 x_i - \lambda x_i \right)^2. \]
Then, each $w_i$ converges to 0 unless $\delta > {1 \over \sigma (1- \lambda)}$, where $\sigma = {1 \over N} \sum_{i=1}^N{x_i^2}$. \end{prop}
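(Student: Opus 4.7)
The plan is to exploit the symmetric initialization $w_i[0]=1$ to reduce the $L$-dimensional dynamics of (\ref{erk1}) to a scalar recursion, and then to argue monotone convergence on the interval $[0,1]$. Exactly as in the proof of Proposition 2, the update rule (\ref{erk1}) preserves the symmetry $w_i[k]=w_j[k]$ for all $i,j$ and $k$, so writing $w[k]$ for the common value I get
\[ w[k+1] = w[k]\bigl(1 - \delta\sigma\, w[k]^{L-2}(w[k]^L-\lambda)\bigr),\]
taking $L\ge 2$ (the case $L=1$ is vacuous since there is no issue of sign).

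Next I would prove by induction on $k$ that whenever $\delta \le 1/(\sigma(1-\lambda))$, the iterate $w[k]$ lies in $[0,1]$ and the sequence is monotonically non-increasing. The base case $w[0]=1$ is immediate, and the inductive step reduces to bounding the factor in parentheses. Since $\lambda<0$, the map $w \mapsto w^{L-2}(w^L-\lambda)$ is a product of two non-negative, non-decreasing functions on $[0,1]$, so it attains its maximum $1-\lambda$ at $w=1$. Combined with the assumed bound $\delta\sigma(1-\lambda)\le 1$, this yields
\[ 0 \;\le\; 1-\delta\sigma\, w[k]^{L-2}(w[k]^L-\lambda) \;\le\; 1,\]
which gives $0\le w[k+1]\le w[k]$, closing the induction.

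Because the sequence $\{w[k]\}$ is bounded and monotone, it converges to some limit $w^*\in[0,1]$, which must be a fixed point, i.e.\ must satisfy $w^{*\,L-1}(w^{*\,L}-\lambda)=0$. The factor $w^{*\,L}-\lambda$ is strictly positive for $w^*\ge 0$ and $\lambda<0$, so the only possibility is $w^*=0$. Combined with $w_i[k]=w[k]$ for every $i$, this concludes the proof for the prescribed range of $\delta$.

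The main obstacle is the non-negativity part of the inductive step, which is precisely what pins down the threshold $\delta^{-1}=\sigma(1-\lambda)$: once $\delta$ exceeds this value, already $w[1]=1-\delta\sigma(1-\lambda)<0$, the factorization and monotonicity arguments break, and $w[k]$ may instead approach some non-zero (possibly negative) root of $w^L=\lambda$ or oscillate. This is exactly the failure mode motivating the positive-eigenvalue decomposition introduced in Section \ref{poseigen}.
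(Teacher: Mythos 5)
Your proof is correct and follows essentially the same route as the paper: reduce to the scalar recursion $w[k+1]=w[k]\bigl(1-\delta\sigma\,w^{L-2}[k](w^L[k]-\lambda)\bigr)$ by symmetry, show by induction that the multiplicative factor stays in $[0,1]$ under the stated step-size bound, and conclude that the monotone, bounded sequence tends to the unique non-negative fixed point $w^*=0$. The paper merely asserts ``it can be shown by induction that $0 \le 1 - \delta\sigma w^{L-2}[k](w^L[k]-\lambda) < 1$'' without spelling out the bound on $w\mapsto w^{L-2}(w^L-\lambda)$ or the limiting fixed-point argument, so you have usefully filled in those gaps; your explicit note that $L=1$ is a degenerate case (where the conclusion actually fails, since $w$ then converges to $\lambda$) is also a sensible refinement the paper leaves implicit.
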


\begin{proof} Similar to the proof of Proposition 2, we can write the update rule for any weight $w_i$ as
\begin{equation*}
 w[k+1] 
 =  w[k] \left( 1 - \delta \sigma w^{L-2}[k] \left( w^L[k] - \lambda \right) \right)
 \end{equation*}
which has one equilibrium at $w^* = \lambda^{1/L}$ and another at  $w^* = 0$. 
If $0 < \delta \le {1/ \sigma(1-\lambda)}$ and $w[0] = 1$, it can be shown by induction that
\[ 0 \le 1 - \delta \sigma w^{L-2}[k]\left(w^L[k] - \lambda \right) < 1\]
for all $k \ge 0$. As a result, $w[k]$ converges to 0. \end{proof}

To resolve this issue of convergence for negative parameters, we can decompose the linear mappings. First consider
\[ \min {1 \over 2N} \sum_{i=1}^N \left(w_L \dots w_1 x_i - z_L \dots z_1 x_i - \lambda x_i \right)^2 \]
where minimization is over $\{w_i, z_i\}_{i \in [L]}$ and  $w_i, z_i \in \mathbb R$ for all $i \in [L]$. Denoting $(\prod w_i[k] - \prod z_i[k] - \lambda)$ by $e[k]$,
we can write the gradient update rule for $w_i[k]$ and $z_i[k]$ as
\[ w_i[k+1] = w_i[k] - \delta \sigma e[k]\prod_{j \neq i}w_j[k],\]
\[ z_i[k+1] = z_i[k] + \delta \sigma e[k] \prod_{j \neq i} z_j[k].\]
If $w_i[0] = z_i[0] = 1$ for all $i \in [L]$, we obtain
\begin{subequations}
\begin{gather}
 w[k+1] = w[k] - \delta \sigma w^{L-1}[k]e[k],\\
 z[k+1] = z[k] + \delta \sigma z^{L-1}[k] e[k], 
 \end{gather}
\end{subequations}
where $e[0] = - \lambda$. Note that if $\lambda < 0$, $w[k]$ decreases and $z[k]$ increases initially, bringing $e[k]$ closer to zero. Even though the origin $w^* = z^*  = 0$ is still an equilibrium, it is unstable and $(w[k],z[k])$ cannot converge to it.

Based on the scalar case, we can build a double linear network to estimate a linear mapping in higher dimensions as well:
\[ \min {1 \over 2N} \sum_{i=1}^N \ \left\| (W_{L} \cdots W_1 - Z_{L} \cdots Z_1)x_i - R x_i \right\|^2, \]
where $R, W_i, Z_i \in \mathbb R^{n \times n}$ for all $i \in [L]$, and the minimization is over the matrices $\{W_i, Z_i\}_{i \in [L]}$. If the gradient descent algorithm is initialized with the identity matrices, we expect  the convergence of the training error for the double network to be faster than that for the single network. To confirm this, we generated a set of random diagonalizable matrices in  $\mathbb R^{20 \times 20}$ with random eigenvectors drawn from the normal distribution $\mathcal N(0, I)$ and random eigenvalues drawn from the uniform distribution on $[-1.5, 1.5]$. We compared the training errors for both networks by choosing $L=20 $.
 As expected, the convergence rate of the double network was consistently better than that of the single network for all the matrices generated.~It was also observed that the gradient descent for the single network became unstable  for some of the matrices, while the algorithm remained stable for the double network. Figure 3 shows a typical~comparison of the training error for the two networks.

\begin{figure}[h]
\label{doublenet}
\hspace{-0.08in}
\includegraphics[scale=0.59]{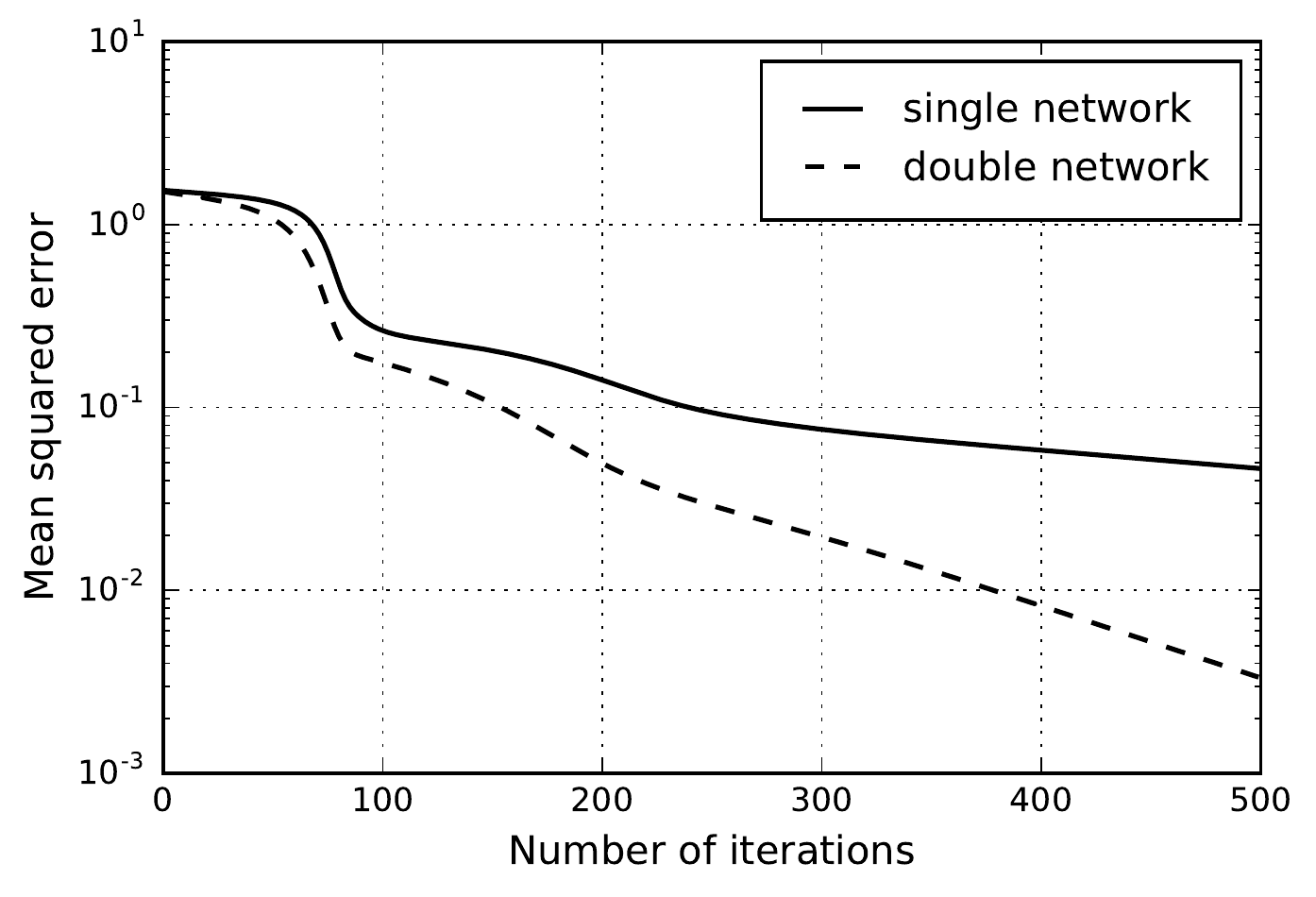}
\vspace{-0.2in}
\caption{Comparison of training a single residual network and a double residual network.}
\end{figure}

\section{Properties of the Local Optima}

In the previous section, we showed that a pair of residual networks with ReLU nonlinearities could be used to approximate a large class of functions. In this section, we show that the solutions obtained by training the network described in (\ref{layer0})--(\ref{layerL}) via gradient descent have low Lipschitz constants and are unlikely to overfit the data. 

Consider a simplified version of the network given in (\ref{layer0})--(\ref{layerL}) with $d = 0$ and $V_i = I$ for all $i \in [L]$:
\begin{subequations}
\begin{gather}
\label{net0} h_0(x) = x,\\
h_i(x) = h_{i-1}(x) + W_i\left[ h_{i-1}(x) - b_i \right]_+ \ \forall i \in [L], \\
\label{netL} \hat f(x) = c^\top x,
\end{gather}
\end{subequations}
where $c \in \mathbb R^n$ has strictly positive entries, and $b_i \in \mathbb R^n$, $W_i \in \mathbb R^{n \times n}$  have nonnegative entries for all $i\in [L]$. 

Given a set of points $\{x_i\}_{i \in [N]}$ and their labels $\{y_i\}_{i\in [N]}$, assume that we train a network described with (\ref{net0})--(\ref{netL}) by minimizing the mean squared error:
\begin{equation}
\label{prob12}
 \min_{c, \{W_j, b_j\}} {1 \over 2} \sum_{i=1}^N \big( \hat f(x_i) - y_i \big)^2.\end{equation}
Let $\{W_i^*, b_i^*\}_{i \in [L]}$ and $c^*$ denote the parameters obtained as the local optimum of the problem (\ref{prob12}). For any $x_i$ at which the function $\hat f$ is differentiable, we can write
\[ \hat f^*(x_i) = {c_l^i}^\top \left[ h^*_{l-1}(x_i) + W_l^*(h^*_{l-1}(x_i) - b_l^*)_+ \right] + d_l^i \]
where
\[ {c_l^i}^\top = {c^*}^\top(I + \tilde W_n^i) \cdots (I + \tilde W_{l+1}^i) \]
and $\tilde W_k^i$ satisfies
\[ 0 \le \tilde W_k^i \le W_k^*. \]
The matrix $\tilde W_k^i$ corresponds to the change in the gradient caused by the activated ReLU functions at layer $k$ for the point $x_i$, and consequently, $c_l^i$ denotes the gradient of the estimate $\hat f^*$ with respect to $h^*_{l}$ at point $x_i$. Given that all elements of $c^*$ are strictly positive, the vector $c_l^i$ also has strictly positive elements for all $i \in [N]$ and $l \in [L]$.

The first order local optimality condition for $\{W_l^*\}_{l \in [L]}$ and $c^*$ dictates that
\begin{subequations}
\begin{gather}
 \hspace{-0.1in}\sum_{i=1}^N c_l^i (\hat f^*(x_i) - y_i)(h_{l-1}^*(x_i) - b_l^*)^\top_+ = 0  \ \forall l \in [L], \hspace{-0.1in} \label{cond111} \\
 \sum_{i=1}^N (\hat f^*(x_i) - y_i)h^*_L(x_i)^\top = 0. \label{cond222} \end{gather}
\end{subequations}
Note that the conditions (\ref{cond111})--(\ref{cond222}) impose that the error vector
\[ [(\hat f^*(x_1)-y_1) \ \ (\hat f^*(x_2) - y_2) \ \ \cdots \ \ (\hat f^*(x_N) - y_N) ]^\top \]
is orthogonal to $ln^2 + n$ vectors. Though these vectors are not necessarily linearly independent, the same indices of these vectors are zero for the points on the same affine piece of the function $\hat f^*$ since the ReLU functions
\[ (h_{l-1}^*(x_i) - b_l^*)_+\]
are activated and deactivated simultaneously for these points. As a result, each affine piece of $\hat f^*$ is likely to be a solution to a weighted-least-squares problem for the points corresponding to that piece.

If the bias parameters $\{b_l^*\}_{l \in [L]}$ are distributed such that $\mathbb R^N$ is spanned by the $ln^2 + n$ vectors that the error vector is orthogonal to,
  the estimator fits all the data points perfectly. However, this is unlikely to happen due to the optimization of the bias parameters. The gradient of the cost function (\ref{prob12}) with respect to $b_l$ around a local optimum $\{W^*_i, b^*_i\}_{i \in [L]}$ is
\[ -\sum_{i=1}^N \text{diag}({\mathbf 1}\left\{h^*_{l-1}(x_i) - b_l^* \ge 0 \right\}){W_l^*}^\top c_l^i (\hat f^*(x_i) - y_i), \]
where $\mathbf 1\{ z \}$ is an indicator function. If the number of layers of the network is large, $W_l^*$ is expected to be very close to 0 for the residual network. In addition, the points for which the ReLU function is inactive do not contribute to the gradient. As a result, the gradient of the cost function with respect to the bias parameters is likely to vanish quickly, and consequently, the bias parameters change very little during training. This suggests that the final values of the bias parameters heavily depend on their initialization. 

To verify these claims, we trained two estimators $\hat f_1, \hat f_2 : [0,1] \to \mathbb R$ with identical architectures to approximate a piecewise affine function $f : [0,1] \to \mathbb R$ whose derivative is
\[ f'(x) = \left\{ \begin{array}{c l} 1 & \text{ if } x \in (0,0.3)\cup (0.5,0.7), \\ -2 & \text{ if } x \in (0.3, 0.5), \\ 
-1 & \text{ if } x \in (0.7, 1). \end{array} \right. \]
The estimator  $\hat f_1$, and similarly $\hat f_2$, were built as $$\hat f_1  = \hat f_{11} - \hat f_{12} + d_1,$$
 where $d_1 \in \mathbb R$ and $\hat f_{11}, \hat f_{12} : [0,1] \to \mathbb R$ were as described in (\ref{net0})--(\ref{netL}) with 10 layers and scalar weight parameters. The initial values of $\{b_i\}$ were drawn from the uniform distribution on $[0, 0.5]$ and $[0,1]$ for the estimators $\hat f_1$ and $\hat f_2$, respectively. Figure \ref{imperfectfit} shows the function $f$ and the estimates $\hat f_1$ and $\hat f_2$ obtained by Nesterov's accelerated gradient descent algorithm \cite{Nest}. 

\begin{figure}[h]
\label{imperfectfit}
\includegraphics[scale=0.58]{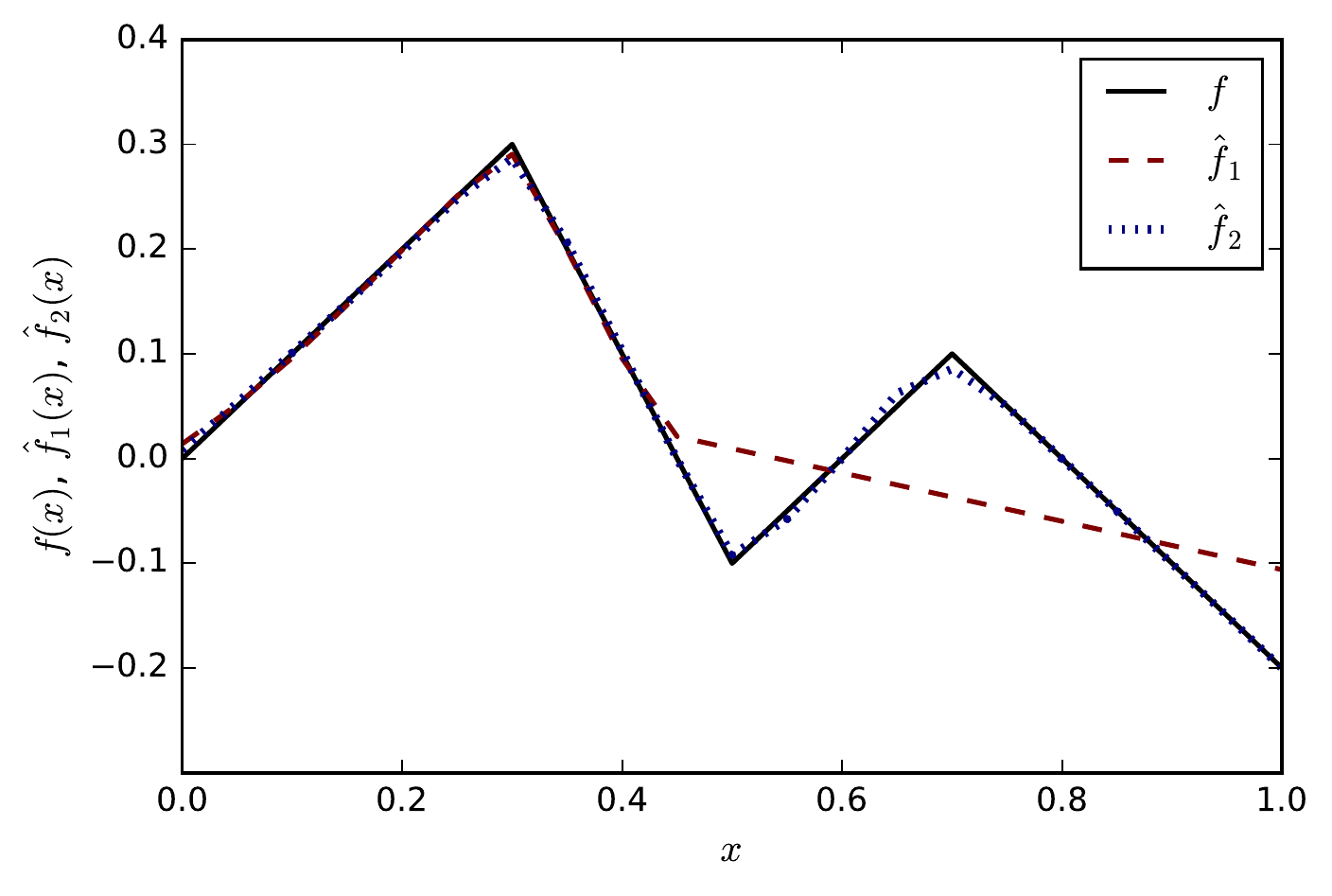}
\vspace{-0.2in}
\caption{The estimates $\hat f_1$ and $\hat f_2$ have the same network architecture, but different initialization for the bias parameters. Depending only on the initialization of the bias parameters, the estimate $\hat f_1$ fails to fit the data perfectly, in which case it provides an estimate with a low Lipschitz constant.}
\end{figure}

Initialized with a larger range for the bias parameters, $\hat f_2$ fits all the data points. The estimate $\hat f_1$, on the other hand, fails to fit the data perfectly even though the network had 10 layers and it could have had 10 segments. Nevertheless, over the region where it fails to fit the data, it is close to a linear estimate of the points belonging to that region. Consequently, the Lipschitz constant of the estimate at a certain point is either the same as or less than that of the original function. 

Small Lipschitz constant, and correspondingly, small spectral norm of an estimator is an indicator of its low excess risk \cite{Bartlett2}. Therefore, we expect an estimator with the decomposed structure to generalize well, which is confirmed on the MNIST data set in the next section.

\section{Experiment on MNIST}

We tested the decomposed  model introduced in Section 3.1 on the MNIST data set, which contains images of handwritten figures $\{0, 1, \dots, 9\}$. Since there are 10 classes, we constructed 10 pairs of residual networks in total. Instead of feeding the raw images into these networks, we first used one convolutional layer with 64 filters of size ${6 \times 6}$ to extract the edges as the features, and then reduced the dimension of the output of this layer by taking the maximum of every non-overlaping ${4 \times 4}$ window. The output of this layer was then given as the input to all of the residual networks, each of which had 3 layers. 

We trained this network for 12 epochs and recorded its accuracy on the training and the test data after each epoch, which is plotted in Figure \ref{mnistfig}. The number of parameters was much larger than standard networks since we used 20 residual networks in total. Furthermore, no explicit regularization or methods such as drop-out or batch-normalization was used. Nevertheless, the training and the test errors were remarkably close to each other throughout the training. After the 12th epoch, the training and the test accuracy were 97.91\% and 97.58\%, respectively. 

\begin{figure}[h]
\label{mnistfig}
\includegraphics[scale=0.6]{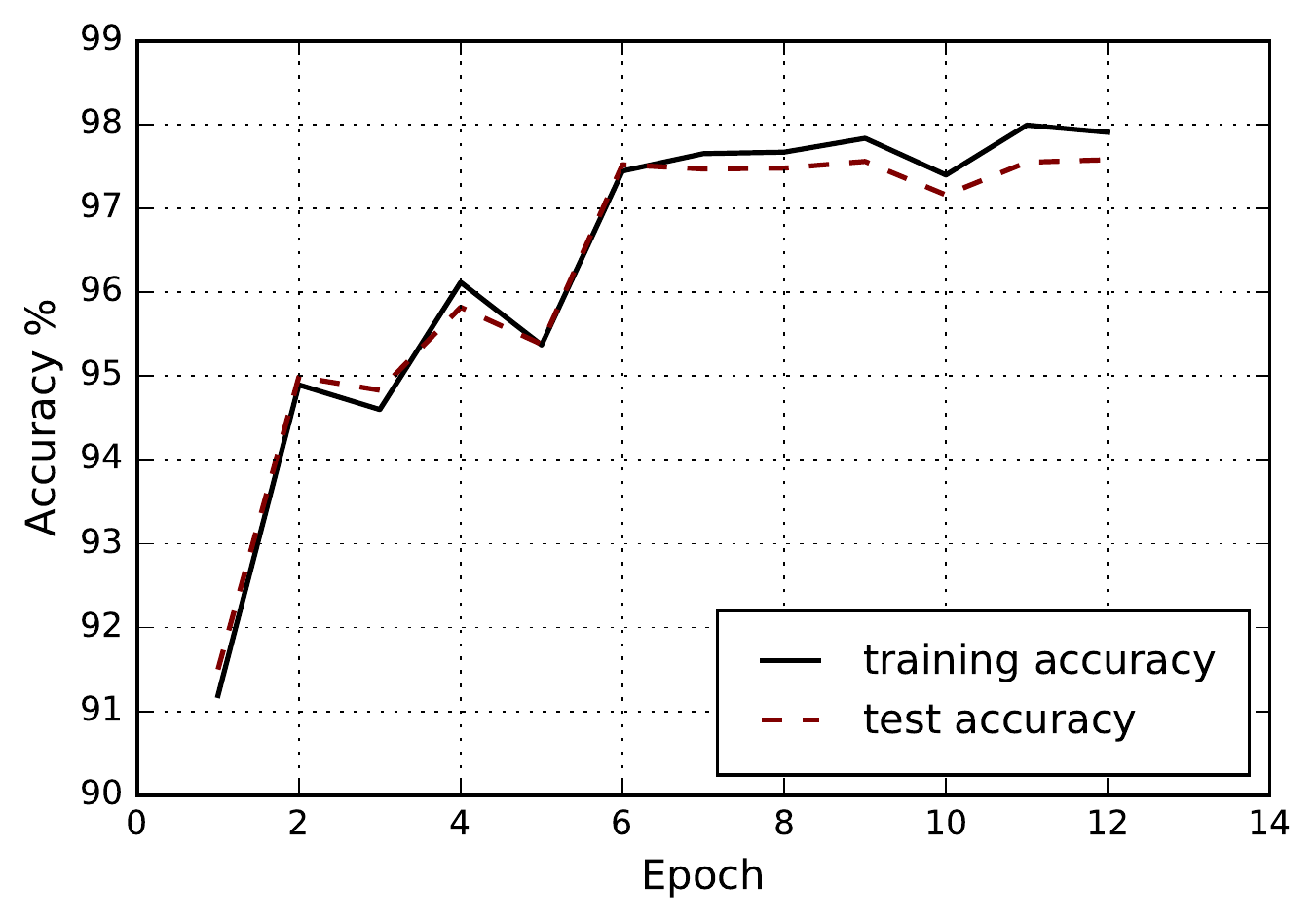}
\vspace{-0.2in}
\caption{Accuracy of the the decomposed network on the training and the test data with no explicit regularization, batch-normalization or drop-out.}
\end{figure}

%

\section{Discussion}

Analyzing the dynamics of the optimization algorithms is crucial for a complete understanding of deep neural networks. Step size of the gradient descent, for example, is a key factor for the Lyapunov stability of its equilibria and should not be disregarded: it determines whether the algorithm can converge to the local optima. Taking the effect of the step size into account, we
showed that~most of the local optima of deep linear networks actually cannot be discovered via gradient-based methods, even though all of them were known to be a global optimum \cite{Kawaguchi16}.

Similarly, the equilibria of other networks could also be unstable for a given step size, and the gradient-based algorithms might only get close to a local minimum but not converge to it \emph{even if the algorithm is not stochastic}. Consequently, an inexact but approximate solution is obtained for the optimization problem, and this naturally contributes to the lack of overfitting in deep neural networks despite their large number of parameters \cite{Recht}.

We also observed that the cost function used for training~a residual network can easily become insensitive to the bias parameters, and the final values of these parameters might heavily depend on their initialization. Though this seems like a problem, it is another factor contributing to the generalization. This happens only for the bias parameters in~the residual networks, but it could happen to the weight parameters as well in other types of networks, and that is what we already know as {the vanishing gradient problem}.
In this respect,~the hardness of the optimization provides some level of regularization for deep neural networks.

We showed that the parametrization of residual networks allows the equilibria of the gradient descent algorithm to remain stable, thereby facilitating the optimization. If unitary matrices are used instead of the identity, similar results could possibly be obtained.
In addition, using an orthonormal set of data points at each step of the gradient descent was seen to help decouple the dynamics of the parameters. This might be a partial explanation for the improvements provided by using batch-normalization in practice \cite{Batchnorm}.

We proposed a network architecture which provides an understanding of how each layer improves the approximation in a deep neural network. We chose convexity as a property to decompose the functions into two parts. The fact~that the function in each layer remained invertible with this decomposition was critical.
Other decompositions could alternatively be generated and tested.

We showed that the architecture introduced generalizes very well on the MNIST data set. The training, however, was very slow due the large number of parameters and the small gradients of the bias parameters. The convergence could possibly be improved  by using alternatives to  gradient-based algorithms to update the bias parameters, although this might risk overfitting the~data. 

It was known that deep linear networks  produce solutions with small Lipschitz constants under some conditions \cite{Gunasekar}, and the Lipschitz constant of the estimators could be used to explain their generalization \cite{Bartlett2}. We demonstrated that the solutions obtained with residual networks also have the same property, and hence, generalize well. 

Lastly, enlarging the region of attraction of the equilibria by choosing a specific control is a standard problem in nonlinear control theory. Finding a state dependent step size to improve the convergence of the gradient descent for neural networks is an ongoing work.

\nocite{langley00}

\bibliography{example_paper}
\bibliographystyle{icml2018}

\end{document}